\begin{document}

\mainmatter  % start of an individual contribution

% first the title is needed
\title{Security Games with Ambiguous Beliefs of Agents}

%% a short form should be given in case it is too long for the running head
%%\titlerunning{Lecture Notes in Computer Science: Authors' Instructions}
%
%% the name(s) of the author(s) follow(s) next
%%
%% NB: Chinese authors should write their first names(s) in front of
%% their surnames. This ensures that the names appear correctly in
%% the running heads and the author index.
%%
%\author{Hossein Khani,
%Mohsen Afsharchi
%}
%%
%\authorrunning{Security Games with Ambiguous Beliefs of Agents}
%% (feature abused for this document to repeat the title also on left hand pages)
%
%% the affiliations are given next; don't give your e-mail address
%% unless you accept that it will be published
%\institute{Institute for Advanced Studies in Basic Sciences,\\
%Zanjan University\\
%\mailsa\\
%\mailsb\\
%\mailsc\\}
%\url{http://www.springer.com/lncs}}
\author{Hossein Khani\textsuperscript{1}%
%\thanks{Please note that the LNCS Editorial assumes that all authors have used
%the western naming convention, with given names preceding surnames. This determines
%the structure of the names in the running heads and the author index.}%
\and Mohsen Afsharchi\textsuperscript{2}}

% the affiliations are given next; don't give your e-mail address
% unless you accept that it will be published
\institute{\textsuperscript{1}Institute for Advanced Studies in Basic Science\\
\mailsa\\
\textsuperscript{2} University of Zanjan, Iran\\
\mailsb}
% NB: a more complex sample for affiliations and the mapping to the
% corresponding authors can be found in the file "llncs.dem"
% (search for the string "\mainmatter" where a contribution starts).
% "llncs.dem" accompanies the document class "llncs.cls".
%

%\toctitle{Lecture Notes in Computer Science}
%\tocauthor{Authors' Instructions}
\maketitle

\begin{abstract}
Currently the Dempster-Shafer based algorithm and Uniform Random Probability based algorithm are the preferred method of resolving security games, in which defenders are able to identify attackers and only strategy remained ambiguous. However this model is inefficient in situations where resources are limited and both the identity of the attackers and their strategies are ambiguous. The intent of this study is to find a more effective algorithm to guide the defenders in choosing which outside agents with which to cooperate given both ambiguities. We designed an experiment where defenders were compelled to engage with outside agents in order to maximize protection of their targets. We introduced two important notions: the behavior of each agent in target protection and the tolerance threshold in the target protection process. From these, we proposed an algorithm that was applied by each defender to determine the best potential assistant(s) with which to cooperate. Our  results showed that our proposed algorithm is safer than the Dempster-Shafer based algorithm.

\keywords{Ambiguous games, tolerance threshold, behavior, optimistic, pessimistic, self-confidence, security games}
\end{abstract}

\section{Introduction}
With the current state of politics, economics, and conflicting ideologies, security has become an ever increasing concern and the driving force behind much strategic development. For example, the security game solving algorithm of DOBSS [1] serves as the core of the ARMOR system, which has been successfully utilized in security patrol schedule at Los Angeles International Airport [2,3].

 In such situation, limited resources pose a constant challenge issue to providing full security coverage.  Examples of  limitations include restricted finances, man power, supplies, etc. In these situations, defenders must often recruit outside agents to aid in the protection of their targets. So, defenders need to do a decision making process to choose some outside agents as cooperator.
 The situation becomes further exacerbated when information on the outside agents is unavailable or ambiguous.
  Ambiguous information refers to the uncertainty of the defender in relation to the objective and behavior of outside agents were they to engage in such cooperation process.

Much literature on the topic of security games attempts to address the issues involved with ambiguous information. In these games, the defenders cannot assign a value for the probability of the outside agent's objective(i.e., whether these agents  will behave more like attackers or defenders in a cooperation process). Here Bayesian models cannot be applied as they require that the defenders be able to assign a precise probability value for the outside agents [4]. For example, consider a battlefield in which security forces try to protect vital resources, while the enemy tries to penetrate their defence and commit espionage. When the security forces employ outside forces to reinforce their strength, they cannot determine for certain whether the i.e., which resources are being targeted. 

In this article, we will refer to the agents who are responsible to protect the target as "defenders" and the outside agents who are asked by the defenders to enhance the protection process are refered to "potential assistants".
Based on the need of defenders to form cooperation processes to enhance protection, and also ambiguity of the decision framework, we propose a model to handle uncertainty in security games. In order to handle such situations, we use the Choquet Expected Utility [5].
Furthermore, we will define two factors of the agents which are used by the others to choose appropriate agents as cooperators. From the agent's point of view, an appropriate agent is  one with whom cooperation  leads to an increase in  payoff  for the defender.

In this article (i) We deal with ambiguous information of the agents about the other agents of different types, (ii) We propose two notions that are based on human behavior, to calculate the ambiguity degree of one agent's belief about the other one, (iii) We propose an algorithm for solving security games with such ambiguous information, (iv) We evaluate our algorithm - and find that our model is efficient and safe for handling security games.

	The rest of the article is organized as follows. Section 2 recaps the Choquet Expected Utility and its usage in decision making. Section 3 introduces our security game as a formal game and proposes an algorithm to solve it. Section 4 describes the experiments conducted to evaluate the performance of the algorithm.  Finally, Section 5 concludes our article.

\section{Background}
This section outlines some backgrounds and describes some key components of decision making under ambiguity.
In this study the cooperation process model is based on that employed by Shehory and Sykara and Sless, Hazon, and Kraus in their respective studies[6,7]. To serve their goals, the agents have to participate in a cooperation process by some amount of their capabilities. In this model, the agents request a percent of the obtained payoff from their cooperation and they gain from the cooperation in accordance to their level of participation and the amount of request they suggest.

Marinacci el al studied details of cooperation of the agents in the presence of ambiguity [5]. The ambiguous setting was described by Paolo Ghirardato [8] as an extension of maximum expected utility which is introduces by Savage[9]. The Ellsberg paradox [10] is an example that shows the importance of studying ambiguous situations and how they differ from certain situations. The Ellsberg Paradox concerns subjective probability theory, which fails to follow the expected utility theory.
To handle ambiguous situations, they defined some real-valued function as neo-additive probabilities, which are called capacities. Using these capacities, they introduced the Choquet Expected Utility as a generalization of expected utility. \\
In the presence of ambiguity, the majority of agents respond by behaving cautiously. Such cautious behavior is referred to as ambiguity-aversion. Bade [11] explains the effect of ambiguity aversion on equilibrium outcomes based on the relaxation of randomized strategy. In contrast, a minority of agents behave  more carelessly, which is referred to as ambiguity-preference. The majority of the time, agents do not act purely in accordance to either ambiguity aversion or ambiguity preference, but rather somewhere in between. 
Also, Wakker [12] extends the notions of ambiguity to arbitrary events and characterizes optimistic and pessimistic attitudes.\\ 

To better understanding the rest of the article, we present below some review of the works done on the field of ambiguity.

We assume the uncertainty a decision maker faces can be described by a non-empty set of states, denoted by $S$. This set maybe finite or infinite. Associated with the set of states is the set of events, taken by sigma-algebra of subsets of $S$, denoted by $\epsilon$.
We assume for each $s$ in $S$, $\{s\}$ is in $\epsilon$ . Capacities are real-valued functions defined on $\epsilon$ that generalize the notion of probability distributions. Formally, a capacity is a normalized monotone set function [13].
\begin{definition}
A capacity is a function $\nu : \epsilon \Rightarrow \mathcal{R}$ which assigns real numbers to events, such that $ E,F \in \epsilon$ and $F \subseteq E$ implies $\nu(E) \geq \nu(F)$. Also $\nu(\phi) = 0$ and $\nu(S) = 1$.
\end{definition}
Let $f:S \rightarrow \mathcal{R}$ be a $\epsilon$-measurable real-valued function. According to the state $s$ the decision maker(agent) decides to execute an action whose results lead to a state from a finite set of states $s^{'}$ and obtaining payoff $f(s^{'})$. That is, the set $f(S)$ is finite.
The Choquet integral can therefore be written in the following intuitive form.\\
\begin{definition}
For any simple function $f$ the Choquet integral with respect to the capacity $\nu$ is defined as:
\begin{eqnarray}
\int f d\nu=\Sigma_{t\in f(s)}[\nu({s|f(s)\geq t})-\nu({s|f(s) > t})].
\end{eqnarray}
\end{definition}
The Choquet integral is interpreted as the expected value of the function $f$ with respect to the capacity $\nu$ [5].
Our research make use of a special kind of capacity called the neo-additive capacity.
Neo-additive capacities can be viewed as a convex combination of an additive capacity and a special capacity that only distinguishes between whether an event is impossible, possible or certain.
To introduce this kind of capacity, suppose the set of events, is partitioned into three subsets: the set of all null events ($\mathcal{N}$), the set of universal events ($\mathcal{U}$) and the set of special events($\epsilon^{*}$).
The null set of events is the set of events that are impossible to occur. Also, the universal set is the set of events which are certain to occur. Finally, every other set is essential in the sense that it is neither impossible nor certain.
%\begin{definition}
%Fixing the set of null events ($\mathcal{N}\subset \epsilon$) and fixing $\alpha$ in $[0,1]$, the Hurwitz capacity, which exactly congruent with $\mathcal{N}$ and with an $\alpha$ degree of optimism, is defined to be:
%
%\begin{equation}
%\mu_{\alpha}^{\mathcal{N}}= \left \{ \begin{array}{ll}
%0 & if~ E\in \mathcal{N}\\
%\alpha & if~E\notin \mathcal{N}~and~ S/E\notin \mathcal{N}\\ 
%1 & if~ S/E \in \mathcal{N}}
%\end{array} \right.
%\end{equation}
%\end{definition} 
The Hurwitz capacity can be viewed as a convex combination of two capacities, one of which reflects complete ignorance or complete ambiguity in everything bar a universal event occurring and the second which reflects complete self-confidence in everything bar null events. Formally, we define a neo-additive capacity as a convex combination of a Hurwitz capacity and a congruent additive capacity.\\
\begin{definition}
For a given set of null events $\mathcal{N}\subset \epsilon$, a finitely additive probability distribution $\pi$ on $(S,\epsilon)$, which is congruent with $\mathcal{N}$ and a pair of numbers $\sigma ,\alpha \in [0,1]$, a neo-additive capacity $ \nu(.|\mathcal{N},\pi , \sigma , \alpha)$ is defined as:\\
\begin{eqnarray}
\nu(E|\mathcal{N},\pi , \sigma , \alpha)=(1-\sigma)\pi(E)+\sigma \mu^{\mathcal{N}}_{\alpha}(E).
\end{eqnarray}
\end{definition} 
For the purpose of this research, each agent shall have a set of possible types $\epsilon$ which can take one of them as its career to participate in a cooperation. It can be interpreted as mathematical definition of various intentions of agent when it starts a cooperation. $E \subset \epsilon$ is equivalent to one specific type that the agent can choose, $\pi$ is the probability distribution that an agent assigns over types of the others initially. Meanwhile, $\sigma$ is the ambiguity degree that the agent has about assigning a probability distribution over types of the others. Finally $\alpha$ is the optimistic or pessimistic attitude toward making decision about types of the others. 
It is straightforward to derive the Choquet integral of a simple function $f$ with respect to a neo-additive capacity:
\begin{definition}
The Choquet expected value of a simple function $f:S\Rightarrow R$ with respect to the neo-additive capacity $ \nu(.|\mathcal{N},\pi,\sigma,\alpha)$ is given by:
\begin{eqnarray}
\int f d\nu =(1-\sigma) E_{\pi}[f]+\sigma(\alpha max\{x:f^{-1}(x) \notin \mathcal{N}\} \nonumber\\ + (1-\alpha) min\{y: f^{-1}(y) \notin \mathcal{N}\})
\end{eqnarray}
\end{definition}
It is important to note that neo-additive capacities satisfy three conditions: $(i)$ They are additive for pairs of disjoint events which are not null and do not form a partition of a universal event.$(ii)$ They exhibit uncertainty aversion for some events. $(iii)$ They exhibit uncertainty preference for some other events.
The ambiguous information of agents represented by neo-additive capacities face some decision problems optimistically and face others’ pessimistically, i.e., there may exist two agents with the same belief and ambiguity degree $(\pi,\sigma)$. The one with smaller $\alpha$ is more ambiguity-averse.

\section{Methods and Materials}

This section explains our security game model and its corresponding details. It presents a method to solve such games.

\subsection*{Problem ِDescription} 
Consider a domain that contains a set of $n$ targets and a set of agents. The set of agents is partitioned into a set of  defenders and a set of  potential assistants. Each defender is responsible for protecting only one target. Based on the amount of energy that the agents use in  target protection, the owner of the target must  reward or punish them accordingly. The amount of reward or punishment is referred to as payoff. Due to the limitation of the defenders on  power supply, they  must rely on outside assistance to enhance their ability to protect. The main issue in this context is that there are diverse types of agents that precise probability cannot be assigned to them by the others.
Thus the goal of the defenders becomes  finding a way to choose the appropriate cooperators among the set of potential assistants, that will result in an increase in the obtained payoff.

\subsection*{Preliminaries}

For this experiment, we assumed that there are three types of agents: the good, the bad, and the worst. These types show the true objectives of the agents and their tendencies in target protection. The good type of the agent is defined as a non-attacker that intends  to fully protect the target. There defender experiences no reduction in payoff. The bad type of  agent is an attacker with whom cooperating leads to modest loss in the payoff of its cooperators. The worst type of the agent is the one with whom cooperating leads to significant loss in payoff. Since the defender is uncertain of the type (i.e., intentions) of its cooperators, it is possible and likely that the real payoff cooperation will differ from the initial amount expected.
 
	When the defender $d$ cooperates with type $T_{a}$ of a potential assistant which uses $w_{a}$ percent of its ability to protect the target $t$, the payoff the defender obtains from such a cooperation is shown by $r_{d} (w,T_{a})$, in which $w=(w_{a},w_{d})$ and $w_{d}$ is the percent of the amount of ability the defender uses in the target protection. This value is assigned by the owner of the target to the defender $d$ according to the specifications of the cooperator (i.e., Its type and the amount of ability used in the protection process). Suppose potential assistant $a$ is chosen by the defender $d$ of type $T_{d}$ as a cooperator. The obtained payoff of potential assistant $a$ is shown by $r_{a} (w,T_{d})$. Since good types of potential assistants employ more ability in the protection process, the defender obtains higher payoff that what it obtained before cooperation. However if the defender $d$ assesses the type $T_{a}$ incorrectly, i.e., assessing the type $T_{a}$ the worst type as a good type, then the real payoff obtained by defender $(r_{d} (w,T_{a}))$ might be lower than what it obtained previously. This is true about the payoff which is obtained by the assistant $a$ when  cooperates with type $T_{d}$ of a defender.

Essentially, the decrease in payoff of the defender $d$ is interpreted as failure of the defender in choosing the best type of potential assistant as a cooperator and the consequent  decrease in payoff depends on the type of potential assistant selected. Conversely an increase in the payoff of the defender $d$ is seen as a success in choosing the appropriate type of assistant. The increase in payoff depends on how potential assistants can be helpful in the protection process(i.e., How much ability they employ in the protection process). This situation is true about the increase and decrease in payoff of potential assistants in a cooperation process.

All of the agent(defenders and potential assistants) use two important notions to assess the other agents before participating any cooperation. The first notion is the "behavior" of the agents in reference to a certain target. This notion is defined as an ordered pair of the amount of ability the  agent uses to protect the target and the requested amount of payoff from the what expected to obtain from cooperation while protecting the target.
Defenders utilize this notion as an index to measure the loyalty of the agent to the target. 
The other important notion is  known as the  "tolerance threshold," which is an index between zero  and one. This notion is applied by each agent in target protection to obtain a value representing the degree of self-confidence the agent has in regards to the amount of ability it uses in the protection process. To better understand the notions, consider the following example:

Suppose the behavior of the agent $i$ about the target $t$ is represented by ($a_{i}^{t},b_{i}^{t}$), also the tolerance threshold of the defender $j$ about the ability it uses in protecting the target $t$ is $TT_{j}^{t}$. The concept behind these two notions and the method in which agents make their selections is based upon human behavior. These notions are selected by the agent subjectively according to its needs. For example the agent may need to increase its payoff by $b_{i}^{t}$, so its decides to protect the target by increasing the amount of its participating ability in the cooperation process, and request its expected payoff. 

Furthermore, if  resources are  restricted, the agent will  exercise more  caution in utilizing  its ability to protect the target(the agent would be aware not to waste its resources carelessly), while the degree of self-confidence or the tolerance threshold  decreases. Note that confidence is inversely related to ambiguity, i.e., more confidence means less ambiguity and vice versa. In our model, the agents can precisely recognize the behavior and the tolerance threshold of  one another. In addition, they  can calculate the real payoffs   given various situations.
The  application of  these notions to evaluate the ambiguity degree is as follows:

If $\frac{b_{i}^{t}}{a_{i}^{t}} \leq TT_{j}^{t} $ holds, the agent $j$ can confidently determine the type of the agent $i$  with which it is interacting in order to protect the target $t$. There is no ambiguity; the agent  is loyal. On the other hand, if $\frac{b_{i}^{t}}{a_{i}^{t}} > TT_{j}^{t} $ holds, the agent $j$ cannot verify the loyalty of the agent $i$, i.e., ambiguity about type of the agent is present. In this scenario, the ambiguity degree of agent $j$ about agent $i$ in cooperation to protect target $t$, $\sigma_{j,i}^{t}$, can be computed as follows:

\begin{eqnarray}
\sigma_{j,i}^{t}=\frac{log_{2}(\frac{b_{i}{t}}{a_{i}^{t}}-TT_{j}^{t})}{log_{2} TT_{j}^{t}}
\end{eqnarray}

We can  verify  the behavior of  this function by exploring the impact of different variables, such as tolerance threshold.\\
Suppose the agent $i$ has a specific behavior $(a_{i}^{t}, b_{i}^{t})$ to deal with the target $t$. As more impatient the agent $j$ is to deal with the target $t$, the lower its tolerance threshold becomes and the numerator of the fraction  increases. Moreover a decrease in the tolerance threshold results in a decrease in the denominator. Meanwhile, increase in the numenator and decrease in the denumenator lead to increase in the ambiguity degree of agent $j$ about agent $i$, $(\sigma_{j,i}^{t})$.
Inversely, the more patient the agent $j$ is, the higher it's tolerance threshold. In this case the numerator of the function will be lower and the denominator higher, which translates to a decrease in the ambiguity degree.

As shown  earlier, the ambiguity degree of the agents towards the types of potential assistants is computed using notions of tolerance threshold and behavior. There we have defined our game as an ambiguous security game in which each agent has some ambiguity degree in respect to its potential assistants in different cooperations.

\subsection*{Formal Definition of the Game}
Using the information obtained from the game, we define our game as $(N, T, B, S, r, U, PI) $ such that:\\

\begin{itemize}
	\item $N=(d,pa)$ is the set of players, where $d$ stands for defender and $pa$ stands for potential assistants.
	\item $T$ contains elements of $T_{i}$ which shows the type set of agent $i$.
	\item $B=\{B_{i}| i\in N\}$ represents the behavior set of each agent.
	\item $S=S_{d} \times S_{p_{a}}$ where $S_{d}=S_{p_{a}}=\{s_{1},s_{2},...,s_{n}\}$ is the pure strategy set of the attacker and the defender representing the different values for the tolerance threshold of each agent.
	\item $r=\{r_{i}(w,T{j})| i,j \in N\}$ and $r_{i}(w,T{j})$ is the real payoff obtained by player $i$ when it cooperate with player $j$. $w$ is the ordered pair of the amount of ability used by the player $j$ and the amount of ability used by the player $i$. 
		\item $U=\{U_{i}(s_{i},(a_{j},b_{j})) | i,j \in N ~\&~s_{i}\in S_{i}~ \&~ (a_{j},b_{j})\in B\}$ is the payoff that the player $i$ expects to obtain according to strategy profile $S$ while it cooperates with $j$ whose behavior is $(a_{j},b_{j})$.
	\item $PI$ is the set of initial beliefs $\pi$ of players about type of each other.
\end{itemize}

The goal of the defender is to examine different possible values for its tolerance threshold and select the one which  maximizes  payoff.  In our security game, first the defender commits an optimal strategy (selects its optimal tolerance threshold) to the potential assistants, and then the potential assistants try to find the optimal strategy for themselves [14]. Because each defender is responsible for protecting only one target, we suppose that its expected payoff is obtained through all available potential assistants in the cooperation. It is also worth mentioning that the expected payoff of the potential assistant is  to be received from the defender regardless of the attendance of the other potential assistants in the cooperation.  Given the above definition, the payoff that the defender $d$ expects to obtain with tolerance threshold $s_{d}$ when taking agent $p$ with behavior ($a_{p},b_{p}$) as its assistant to protect the target $t$  $(u_{d}(s_{d},(a_{p},b_{p})))$ is expressed by the following:
\begin{eqnarray}
u_{d}(s_{d},(a_{p},b_{p}))=(1-\sigma_{d,p}^{t})\pi(t_{p})+\sigma_{d,p}^{t}[\alpha max_{t_{p}} r_{d}(a_{p},t_{p}) \nonumber \\+(1-\alpha)min_{t_{p}} r_{d}(a_{p},t_{p})].
\end{eqnarray}
This formula is a variant of formula (4).
In this formula, $\sigma_{d,p}^{t}$ is the ambiguity degree of the defender in relation to the potential assistant $p$, this value can be computed by formula (5). $\pi(t_{p})$ is the initial belief of the defender on the types of the opponents, which $t_{p}$ shows a specific type from all available types of the potential assistant. $\alpha$ is the degree of optimism, which shows how optimistic the defender is in its computations. 
Suppose the set of potential assistants who cooperate with the defender to protect the target $t$ is shown by $C_{t}$, then the total payoff which is obtained by defender $d$ to protect target $t$ is represented by:

\begin{align}
U_{d}((s_{d},s_{p}),B)=b_{d}^{t}\times \Sigma_{i \in C_{t}} u_{d}(s_{d},b_{i})
\end{align}
\\
The payoff of the potential assistant, as it begins to assist the defender, is computed by:

\begin{eqnarray}
U_{p}(s_{p},(a_{d},b_{d}))=b_{p}((1-\sigma_{p,d}^{t})\pi(t_{p})+\sigma_{p,d}^{t}[\alpha max_{t_{d}} r_{p}(a_{d},t_{d}) +\nonumber \\ (1-\alpha)min_{t_{d}} r_{p}(a_{d},t_{d})]).
\end{eqnarray}\\
This formula is a variant of formula (4). Note to the difference between the number of arguments of function $U_{d}$ and $U_{p}$. It is important to know that in formula (6) and (8), if $\sigma_{p,d}^{t}=0$, these formulas are reduced to Savage Expected Utility. If $\alpha=0$ the agent is on pessimistic view and if  $\alpha=1$, it is in its optimistic view.
Given the defenders strategy $s_{d}$ the optimal strategy of the potential assistant is $s_{a}^{*}$ if:

\begin{eqnarray}
U_{p}(s_{a}^{*},(a_{d},b_{d})) \geq U_{p}(s_{a}^{'},(a_{d},b_{d}))~~ \forall s_{a}^{'} \in S_{a}
\end{eqnarray}

In the ambiguous security game, given the defender strategy $s_{d}$, if the optimal response strategies of the potential assistant are $s_{a}^{*}$, then $s_{d}^{*}$ is the defender’s optimal strategy if:

\begin{eqnarray}
U_{d}(s^{*}_{d},s_{a}^{*},B) \geq U_{d}(s^{'}_{d},s_{a}^{*},B)~~ \forall s_{d}^{'} \in S_{d}
\end{eqnarray}

\subsection*{Algorithm}
As mentioned previously, the defender executes its optimal strategy first. To accomplish this, it must  evaluate the strategies of the potential assistants and choose the best one  based on the strategies which are chosen by the potential assistants. 

The defender finds its optimal strategy in two phases. In the first phase, he needs to recognize which one of the potential assistants is willing to cooperate with him in the cooperation process. The potential assistant is willing to cooperate with defender $d$, if the payoff the defender expects to obtain from this cooperation is higher than its previously obtained payoff. In the second phase, the defender $d$ starts to compare its different strategies on the basis of the amount of payoff it expects to get.

In the first phase, the defender calculates the ambiguity degree for each potential assistant and all possible strategies they can choose using formula (5). According to the different values obtained, the defender  uses formula(8) to compute the expected payoff for each potential assistant. The defender then compares, for each potential assistant, the different strategies that they can choose on the basis of their corresponding expected payoff and determines their optimal strategy.  The defender finds, for each potential assistant, their optimal strategy and their corresponding expected payoff (formula 9). If the maximum expected payoff of a potential assistant is higher than its previously obtained payoff, the defender will know that the potential assistant is willing to cooperate with him. The first phase concludes with the defender having identified the set of potential assistants who are willing to cooperate. This set is shown by $C_{t}$ where $t$ is the target that the defender is responsible to protect.

In the second phase, the defender evaluates its options through a similar process to the first phase. Using formula (5), it quantifies ambiguity degrees and from formula (6) the corresponding expected payoff for each strategy. The winning strategy of the defender is the one that maximizes its expected payoff (formula 10). The algorithmic process of the game solution is shown in algorithm 1. 

\begin{algorithm}
\caption{Choquet Expected Utility Based Solution Algorithm}
\begin{algorithmic}
\REQUIRE $ Strategy~set (S), Type~set(T),$ \\ $Behaviuor~set(B), reward~and~penalties$
\ENSURE $S^{*}$
\FOR {Each strategy from defender's strategy set}
\FOR {Each potential assistant}
\STATE Use formula (8) and (9) to find the optimal strategy of the potential assistant and his expected payoff in the view of the defender. 
\ENDFOR
\STATE Compare the expected payoff of potential assistants with their previous payoff and find the set of agents who are willing to cooperate with defender $(C_{t})$
\ENDFOR

\STATE find the maximum payoff of the defender obtained while he cooperates with the set of willing potential assistants using  formulas (6) and (7), $s^{*}=argmax_{s} \Sigma_{i \in C_{t}} U_{d}(S,B)$

\end{algorithmic}
\end{algorithm}
The time complexity of the algorithm is computed in the theorem below.\\
\begin{theorem}
In security games with ambiguous agent types, the complexity of finding optimal pure strategies of a selected defender, is $O(\#players,\#outcomes)$.
\end{theorem}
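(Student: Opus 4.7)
The plan is to prove the complexity bound by directly counting the arithmetic operations executed in Algorithm 1, which consists of two nested-loop phases. First I would fix notation: let $p$ denote $\#\text{players}$ (the defender together with the potential assistants), let $\tau$ denote $\#\text{outcomes}$ (the size of the relevant type set $T_i$ from which the $\max$ and $\min$ in formulas (6) and (8) are drawn), and let $n=|S_d|$ denote the number of candidate tolerance-threshold strategies available to the defender.

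Next I would analyze Phase~1 of the algorithm. The outer loop ranges over the $n$ defender strategies and the inner loop ranges over the $p-1$ potential assistants. Inside the inner body, formula~(5) is evaluated in constant time (one division and one logarithm), while formula~(8) requires $\max_{t_d} r_p$ and $\min_{t_d} r_p$ over the assistant's type set, each costing $O(\tau)$. Selecting the best strategy for each assistant via formula~(9) and assembling the willing set $C_t$ contributes at most an additional $O(n\cdot p)$ comparisons. Altogether Phase~1 runs in $O(n\cdot p\cdot \tau)$ time. Phase~2 re-uses the same cached quantities: for each of the $n$ defender strategies, formula~(6) sums $u_d$ over $C_t$, and formula~(10) extracts the $\arg\max$; this is dominated by the same $O(n\cdot p\cdot \tau)$ bound, so the two phases combine to the same order.

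The main obstacle, and the most delicate part of the write-up, is interpreting the statement $O(\#\text{players},\#\text{outcomes})$ itself, which as written is not standard big-$O$ notation. I would read it as a polynomial bound of the form $O(\#\text{players}\cdot \#\text{outcomes})$, and then justify why the strategy-count factor $n$ may be suppressed: since the tolerance thresholds are drawn from a fixed common grid for every agent (so $n$ is a parameter of the game rather than of the input), or alternatively by declaring that $n$ is folded into one of the two stated parameters. Making this reduction explicit is the step that actually requires argument; once it is granted, the rest of the proof is a routine loop-counting exercise based on the formulas established in the preceding subsections, and the theorem follows.
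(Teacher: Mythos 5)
Your proof is a direct operation count of Algorithm 1, and it takes a genuinely different route from the paper. The paper's own proof is an abstract backward-induction argument: each pure strategy the defender commits to induces a subgame for the remaining players, each subgame is solved recursively for its optimal profiles, and the defender's optimal strategies are those whose induced subgames yield maximal utility. That argument justifies the \emph{correctness} of the leader-commits-first decomposition and only implicitly gestures at the complexity (the recursion visits each player once per committed strategy and each outcome once per subgame), whereas your write-up does the opposite: it takes the two-phase structure of Algorithm 1 as given and explicitly tallies the $O(n \cdot p \cdot \tau)$ arithmetic work, including the $\max/\min$ over the type set inside formulas (6) and (8). Your approach buys a concrete, verifiable bound tied to the algorithm actually presented, and you are right to flag that $O(\#\text{players}, \#\text{outcomes})$ is not standard notation and that the strategy-count factor $n$ must be either suppressed or absorbed into one of the two stated parameters --- the paper never addresses this, so your explicit treatment of that reduction is a genuine improvement in rigor. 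What your version does not supply, and the paper's does, is the argument that restricting to the two-phase greedy evaluation loses no optimal strategy profiles; if you want your proof to fully replace the paper's, you should add a sentence observing that maximizing the assistants' responses per committed defender strategy and then maximizing over defender strategies is exactly the recursive subgame decomposition, so the loop you are counting enumerates all candidates for the optimum.
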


\begin{proof}
Each pure strategy to which the first player may commit will induce a subgame for the remaining players. We can solve each such subgame recursively to find all of its optimal strategy profiles; each of these will give the original first player some utility. Those that give the first player maximum utility correspond exactly to the optimal strategy profiles of the original game.
\end{proof}

Note that if the defender $d$ assesses the type of the potential assistant ($a$) as a cooperator wrongly, it means $T_{a}$ is the worst type but the defender miscategorizes it as a good type, the real obtained payoff by the defender would decrease.
The defender $d$ inaccurately assesses the type of the others when it  miscalculates the expected payoff . An error in  computation is  a result of flawed  initial beliefs apropos the other agents
\section{Experiments}
In this section, we are going to evaluate our model via experimentation. One of the more realistic works is done by Zhang and his colleagues in[14].
They use a model (D-S theory based model) to handle the ambiguous information about the types of the attacker in a security game. They define a basic probability assignment or a mass function and use it to define a preference degree of each agent over different strategies. 
In order to prove the efficiency of our proposed algorithm we are going to evaluate our model by the D-S theory based solution concept.
To use  D-S theory,  the mass function that the agents assign to different events  must include  capacities  used in our algorithm. In order to incorporate  capacity from our algorithm to the mass function, we used the following method [15]: 

\begin{eqnarray}
m(S)=\Sigma_{H\subset S} (-1)^{(|S|-|H|)} \nu(H)
\end{eqnarray}

Now we use the experiments to show that, the worst case in our model is better than the method which uses D-S theory based model.

To perform our evaluations, we suppose that there are five targets and a set of agents who want to protect the targets and gain payoff from their owners. At first, every target is protected by only one agent as defender. The remaining agents are called potential assistants which are going to be used when necessary. For simplicity, we assume that at first, agents who work as defender gain a fixed amount of payoff from owners of the targets and potential assistants gain zero payoff. Also, we suppose that all of our agents can be in one of three types which are the good type, the bad type and the worst type. 

For simplicity of programming, we initialize each argument of the ordered pair of the behavior of agents by random values in the interval [0,1]. These values show the percentage of expected payoff the agent requested and the percent of the employed ability to protect the target respectively.

In addition, each agent has a tolerance threshold about their amount of employed ability to protect the target. The value which is assigned to the tolerance threshold of the agent is relevant to the amount of its employed ability in the target protection. The higher the amount of employed ability of the agent, the lower the tolerance threshold of the agent will be in the target protection. For example in our algorithm if $0.6<a_{i}^{t}<0.8$ holds, the agent assigns value $ 0.2$ to its tolerance threshold in the cooperation process. If $0.4<a_{i}^{t}<0.6$ holds the assigned value to the tolerance threshold $(TT_{i}^{t})$ will be $0.4$ and so on.

Due to the randomly assigned values to the behavior of the agents, we run our experiments 100 times according to the mentioned algorithm in previous section and use the average of the experiments as the outcome of the algorithm. 

First of all, we evaluate the impact of changing the number of agents on the outcome of our proposed algorithm(CEU-based solution algorithm) and D-S theory based algorithm. We observe that in the worst case, the number of true detections in our algorithm is more than the number of true detections in D-S theory based algorithm. In one of the comparison we assume all the agents are ambiguity-preference( the degree of optimism of all agent is high) and ih the other one we assume the agents are neither ambiguity averse nor ambiguity preference(the degree of optimism of all agent is moderated). It shows that being ambiguity averse or ambiguity preference has no impact of the goodness of CEU-based algorithm (see Figure 1(a), (b)).

It is important to know that in our implementation, the true detections of the defender $d$ are based on comparing the amount of payoff the defender expects to obtain before participating in the cooperation and what it really obtains after participating. If its expected payoff is equal to the real payoff it will obtain after cooperating with a potential assistant, the defender truly detects the type of its assistant.

Another evaluation which is done in our experiments is exploring the impact of changing the number of strategies agents can choose (changing the values can be assigned to tolerance threshold) on the outcome of two algorithms. The result of this evaluation shows that changing the number of strategies has no impact on the goodness of CEU based algorihm to do more true detections (See Figure 2(a),(b)).

Also, we use two metrics to shows how the CEU-based algorithm is better than the D-S theory based algorithm. One of the metrics is "sensitivity" which we use it to compare the two algorithms.(See Figure(3)). Regardless of different number of strategies and different number of types the agent can have, the figure shows that the CEU based algorithm is more sensitive than D-S theory based algorithm.
 We find the sensitivity measure using the formula:
\begin{eqnarray}
Sensitivity=\dfrac{\# of~True~Positives}{\#of~True~Positives + \# of~False~Negatives}
\end{eqnarray}
~~\\
To explore the errors in the two algorithms, we use MRSE measure(Mean Root Square Error). We find the value of normalized MRSE for each algorithm and refer to the difference between them as distance of the worst penalties(See Figure (4)). 
\begin{figure}
\centering
\begin{subfigure}{.5\textwidth}
  \centering
  \includegraphics[width=1.0\linewidth]{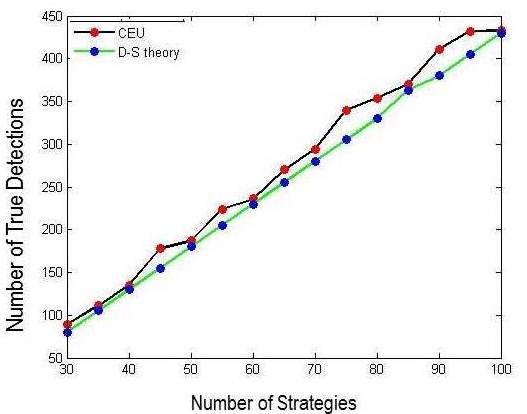}
  \caption{(a) alpha=0.5}
  \label{alpha=0.5}
\end{subfigure}%
\begin{subfigure}{.5\textwidth}
  \centering
  \includegraphics[width=1.0\linewidth]{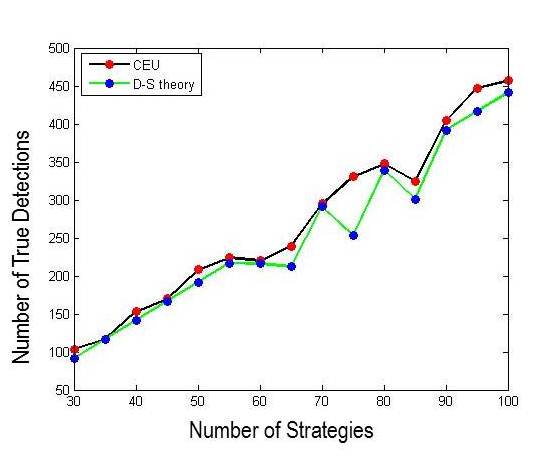}
  \caption{(b)alpha=0.7}
  \label{fig:sub2}
\end{subfigure}
%\begin{subfigure}{.5\textwidth}
%  \centering
%  \includegraphics[width=1.0\linewidth]{alpha8.jpg}
%  \caption{(c)alpha=0.8}
%  \label{fig:sub2}
%\end{subfigure}
\caption{ Comparing the performance of CEU-based solution algorithm with D-S theory based solution agorithm.(number of types=3, number of strategies=8)}
\label{fig:test}
\end{figure}

\begin{figure}
\centering
\begin{subfigure}{.5\textwidth}
  \centering
  \includegraphics[width=1.0\linewidth]{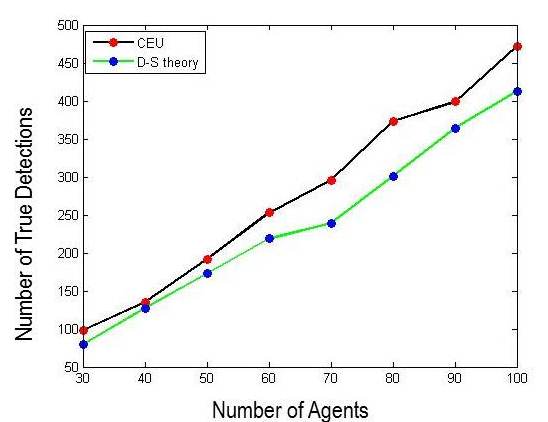}
  \caption{(a)number of strategies=4}
  \label{number of strategies=4}
\end{subfigure}%
\begin{subfigure}{.5\textwidth}
  \centering
  \includegraphics[width=1.0\linewidth]{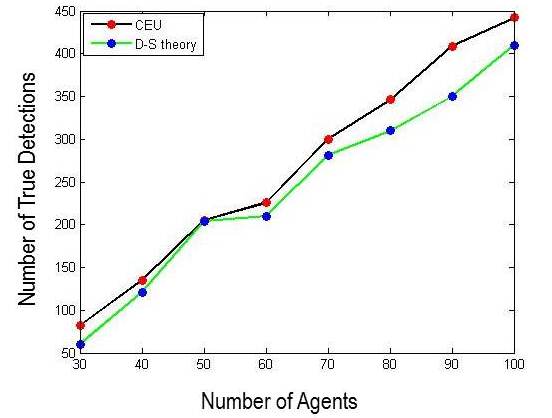}
  \caption{(b)number of strategies=8}
  \label{fig:sub2}
\end{subfigure}
\caption{ Comparing the performance of CEU-based solution algorithm with D-S theory based solution agorithm considering different number of types and strategies each agent could have.(number of types=4)}
\label{fig:test}
\end{figure}

\begin{figure}
\centering
\begin{subfigure}{.5\textwidth}
  \centering
  \includegraphics[width=1.0\linewidth]{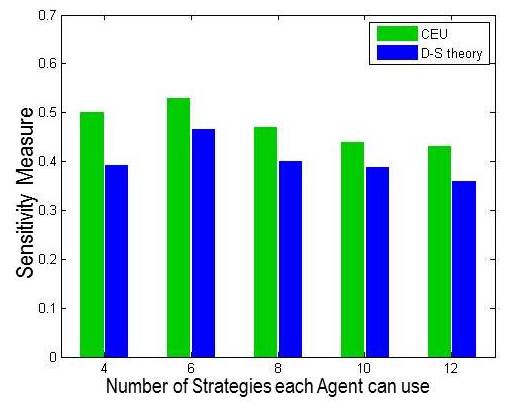}
  \caption{(a)number of strategies=4}
  \label{number of types=3}
\end{subfigure}%
\begin{subfigure}{.5\textwidth}
  \centering
  \includegraphics[width=1.0\linewidth]{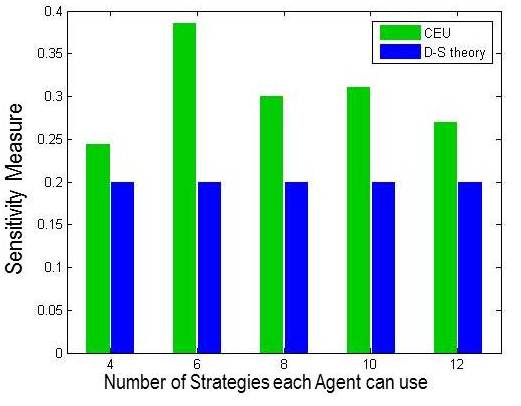}
  \caption{(b)number of types=5}
  \label{fig:sub2}
\end{subfigure}
\caption{ Comparing the Sensitivity of two algorithms considering different number of types for each agent(number of agents=40).}
\label{fig:test}
\end{figure}

\begin{figure}
\centering
\begin{subfigure}{.5\textwidth}
  \centering
  \includegraphics[width=1.0\linewidth]{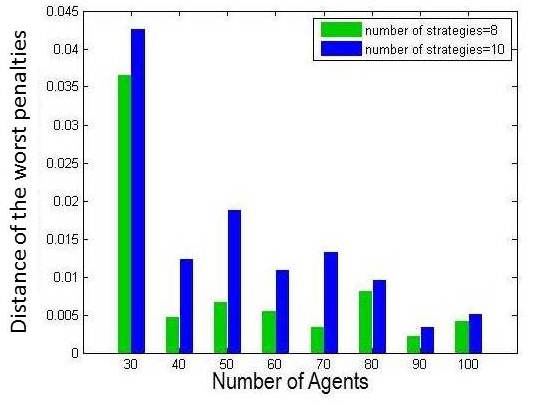}
  \caption{(a) Comparison by different number of strategies}
  \label{number of types=3}
\end{subfigure}%
\begin{subfigure}{.5\textwidth}
  \centering
  \includegraphics[width=1.0\linewidth]{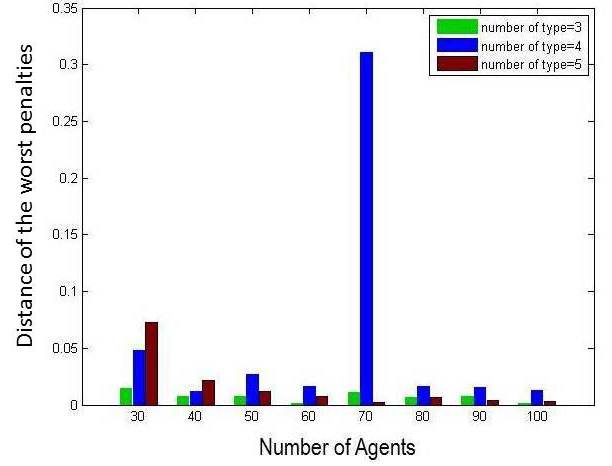}
  \caption{(b) Comparison by different number of types}
  \label{fig:sub2}
\end{subfigure}
\caption{ Comparison of the two algorithms based on distance of the worst penalties.}
\label{fig:test}
\end{figure}

~~\\
Observation1: Our model guarantees more safety than the model based on D-S theory by considering different degrees of optimism.

~~\\
Observation 2: The Choquet Expected Utility solution based algorithm is safer than the model based on D-S theory, by considering different number of types and strategies for each agent.
~~\\
Some evaluations have been done in [14] in order to prove the efficiency of D-S theory based algorithm against the algorithm based on Uniform Random Probability. Based on these experiments, we can conclude that the Choguet Expected Utility based algorithm is safer that the algorithm based on Uniform Random Probability algorithm.
~~\\
\section{Conclusion and Future work}
This paper proposes a new paradigm of security games, in which the defenders on one hand they need the help of potential assistants in order to have incessant protection of their vital targets against attackers due to their limited capabilities. On the other hand, Regard to the presence of ambiguous information of the agents about their tendencies, the defenders cannot decide precisely about selection of their cooperators. We consider that at first the defender execute his optimal strategies and then potential assistants do their tasks, since that the strategies of the defenders are known to potential assistants, so the defender must first consider the strategies that can be chosen by potential assistants to have the better understanding of the situation. We develop an algorithm based on the Choquet Expected Utility to find the defender's optimal strategy and discuss their computing complexity.
Furthermore, we evaluate our model by lots of experiments, we find: (i) The agents face their ambiguity by defining the two notions, behavior and tolerance threshold. (ii) Our model guarantee more safety than the model based on D-S theory by regardless of agents being ambiguity-averse or ambiguity-preference. (iii) We show that our model is more sensitive than D-S theory based model and as a result it is more efficient then Uniform Random probability based algorithm.(iv) We use the Mean Root Square Error (RMSE) to show that errors in CEU-based algorithm are less  than D-s theory based algorithm. So our model can well handle the interaction of agents to protect a common target in the presence of ambiguous information.  

As the future works we are going to deal with equilibria issues in our proposed framework according to the work done by Weber [16] that they investigated the difference among equilibria with respect to various attitudes toward ambiguity, and showed that different types of contingent ambiguity  affect equilibrium behavior, and based on what Kilka [17], and Wu and Gonzalez [18] experimented about this topic. 

\section*{References}
[1] Paruchuri, P., Pearce, J.P., Marecki, J., Tambe, M., Ordonez, F., Kraus, S.: Playing games for security: An efficient exact algorithm for solving Bayesian Stackelberg games. Proceedings of the 7th International Joint Conference on Autonomous
Agents and Multiagent Systems, vol. 2, pp. 895–902 (2008).\\

[2] Tambe, M.: security and game theory: Algorithms, Deployed Systems, Lessons Learned. Cambridge University Press, Cambridge, (2011).\\

%[3] Pita, J., Jain, M., Marecki, J., Ord´o˜nez, F., Portway, C., Tambe, M., Western, C., Paruchuri, P., Kraus, S.: Deployed ARMOR protection: The application of a game theoretic model for security at the Los Angeles International Airport. Proceedings of the 7th International Joint Conference on Autonomous Agents and Multiagent Systems: Industrial Track, pp. 125–132 (2008).

[3] Zimmermann, E.: Globalization and terrorism. European Journal of Political Economy 27(suppl. 1), S152–S161 (2011).\\

[4] Marinacci,M.:Ambiguous Games. Games and Economic Behavior, (31:2), 191-219 (2000).\\

[5] Marinacci,M., Montrucchio,L.: Introduction to the Mathematics of Ambiguity .Dipartimento di Statistica e Matematica Applicata and ICER Università di Torino, (2003).\\

[6] Shehory,O., Sykara,S.:Multi-agent Coordination through Coalition Formation, Lecture Notes in Artificial Intelligence no. 1365, Intelligent Agents IV, A. Rao, M. Singh and M. Wooldridge (Eds.), pages 143-154. Springer, (1997). \\

[7] Sless,L., Hazon,N., Kraus,S., Wooldridge,M.: Forming coalitions and facilitating relationships for completing tasks in social networks, AAMAS, (2014).\\

[8] Ghirardato,P.: Ambiguity. Dipartimento di Matematica Applicata and Collegio Carlo Alberto, Universitate di Torino, (2010).\\

[9] Leonard J. Savage.:The Foundations of Statistics. Wiley, New York, (1954).\\

[10] Segal,U.: The Ellsberg Paradox and Risk Aversion:An Anticipated Utility Approach. University of Toronto, Department of Economics, (1985).\\

[11]  Bade, S.: Ambiguous act equilibria. Games and Economic Behavior, 71(2):246–260, (2011).\\

[12] Wakker,P.: Testing and characterizing properties of nonadditive measures through violations of the sure thing principle. Econometrica, 69:10391060, (2001).\\

[13] Chateauneuf, A., Eichberger, J., Grant, S.:Choice under uncertainty with the best and worst in Mind: Neo-additive capacities. Journal of Economic Theory, 137, 538 - 567(2007).\\

[14] Zhang, Y., Luo X., Ma, M.: Security Games with Ambiguous Information about Attacker Types. Australasian Conference on Artificial Intelligence, volume 8272 of Lecture Notes in Computer Science, page 14-25. Springer, (2013).\\

%\bibitem{5} Pita J., Jain M., Ord´o˜nez F., Portway C., Tambe M., Western C., Paruchuri P., and Kraus S.: Using game theory for Los Angeles airport security. AI Magazine, 30(1):43–57, (2009).

[15] De Marco, G., Maria R.: Beliefs correspondences and equilibria in ambiguous games. International Journal of Intelligent Systems, 27(2):86–102, (2012).\\

[16] Abdellaoui, M., Vossmann F., Weber, M.: Choice-based elicitation and decomposition of decision weights for gains and losses under uncertainty. Management Science, 51:13841399, (2005).

[17] Kilka, M., Weber, M.: What determines the shape of the probability weighting function under uncertainty? Management Science, 47:17121726, (2001).

[18] Wu, G., Gonzalez, R.: Nonlinear decision weights in choice under uncertainty. Management Science, 45:7485, (1999).

%
%\bibitem{4} Tambe, M.: Security and game theory: Algorithms, deployed systems, lessons learned. Cambridge University Press, New York (2011)

%\bibitem{url} National Center for Biotechnology Information, \url{http://www.ncbi.nlm.nih.gov}

%
%\bibitem{CTOPRE}De Marco, G., Maria R.: Beliefs correspondences and equilibria in ambiguous
%games. International Journal of Intelligent Systems, 27(2):86102, (2012).

%\end{thebibliography}
\end{document}